\documentclass[acmsmall]{acmart}
\usepackage{algorithm}
\usepackage{algorithmic}
\usepackage{booktabs}
\usepackage{multirow}

\AtBeginDocument{%
  \providecommand\BibTeX{{%
    \normalfont B\kern-0.5em{\scshape i\kern-0.25em b}\kern-0.8em\TeX}}}

\setcopyright{none}
\renewcommand\footnotetextcopyrightpermission[1]{}

\begin{document}

\title{Accelerating Floating-Point Satisfiability Solving via Gradient Normalization}

\author{Yuanzhuo Zhang}
\affiliation{%
  \institution{Independent}
  \country{China}
}

\begin{abstract}
Satisfiability Modulo Theories (SMT) solvers are foundational to software verification, program analysis, and compiler testing, particularly over the theory of Quantifier-Free Floating-Point ({QF\_FP}). While recent optimization-based SMT solvers have successfully applied gradient descent to continuous relaxations of logical formulas, they are fundamentally bottlenecked by gradient domination—a phenomenon where a small subset of difficult clauses hijacks the optimization trajectory, preventing the solver from satisfying the broader formula and trapping it in local minima.

To overcome this, we present GradSAT, a novel framework that bridges optimization-based SMT solving with Multi-Task Learning (MTL). GradSAT reformulates the constraint satisfaction process by treating each SMT clause as an independent MTL task. By applying dynamic gradient normalization (GradNorm), GradSAT actively balances the gradient magnitudes across all clauses at runtime, systematically penalizing dominant gradients and accelerating lagging clauses to ensure uniform convergence. GradSAT implements this through a highly optimized, two-stage hybrid pipeline. First, a GPU-accelerated PyTorch backend—leveraging symbolic compilation and operator fusion—navigates the continuous relaxation to a high-quality basin. Second, the candidate assignment is handed off to a bit-precise local search engine to rapidly resolve the exact, rigorous assignment. By stabilizing the continuous search dynamics, GradSAT mitigates the brittleness of prior gradient-based solvers and provides a robust, highly parallelizable architecture for complex constraint solving.
\end{abstract}

\maketitle

\section{Introduction}
\label{sec:intro}

Satisfiability Modulo Theories (SMT) solvers constitute a foundational layer of modern software engineering infrastructure. They serve as the reasoning engines behind software verification, symbolic execution, bounded model checking, compiler testing, automated program repair, program synthesis, and a broad range of static and dynamic analysis pipelines. In these systems, the solver is not an isolated backend component whose performance can be amortized away; rather, its scalability and robustness often determine the practical reach of the entire tool. A verification engine that issues thousands of solver queries, a symbolic executor that repeatedly solves path conditions, or a compiler-testing framework that depends on semantic equivalence checks is only as effective as the underlying reasoning procedure permits. Consequently, improving SMT solving is not merely a logic-level concern, but a systems-level concern for software engineering.

Within the SMT landscape, quantifier-free floating-point arithmetic (QF\_FP) is especially important and especially difficult. Floating-point computation pervades performance-critical and correctness-critical software, including numerical libraries, scientific software, graphics and vision kernels, machine learning runtimes, embedded control software, cyber-physical systems, and safety-critical code in aerospace, automotive, and finance. For software engineering tools that aim to reason faithfully about real programs, QF\_FP is indispensable because it captures the precise semantics of IEEE-754 arithmetic, including rounding modes, exceptional values, subnormals, overflow, underflow, cancellation, and non-associativity. These behaviors are not peripheral edge cases. They are central to whether a compiler transformation is semantics-preserving, whether a symbolic execution engine reports a feasible path correctly, and whether a verification tool can establish or refute a property of a numerically intensive program.

At the same time, floating-point reasoning is notoriously challenging. Exact SMT solvers such as Z3, cvc5, and MathSAT typically handle QF\_FP by reducing floating-point operators to bit-precise encodings via eager or lazy bit-blasting, after which a SAT-level search procedure carries most of the burden. This approach is semantically rigorous and has enabled major progress in automated reasoning. However, it often scales poorly for formulas containing deeply nested floating-point expressions, non-linear arithmetic structure, multiple interacting rounding behaviors, or large families of path constraints generated by software engineering workloads. Multiplication, division, fused operations, and comparisons over special values can generate large and dense Boolean encodings, causing severe memory pressure and combinatorial blowups. In practice, such costs are directly visible at the tool level: verification conditions time out, symbolic execution loses depth, test generation misses cases, and compiler-analysis pipelines become too expensive to run at scale.

These limitations have motivated the search for alternative solving paradigms. One line of work explores incomplete but lightweight search-based methods, such as stochastic local search and fuzzing-inspired techniques, which attempt to discover satisfying assignments through mutation, coverage guidance, or meta-heuristic search. These approaches can be effective on some satisfiable benchmarks, particularly when a witness is relatively easy to stumble upon. However, they lack stable directional information and often struggle with formulas whose satisfying regions are narrow, whose arithmetic structure is highly non-linear, or whose Boolean organization requires coordinated progress across many constraints. For software engineering applications that value robustness across diverse workloads, such behavior can be difficult to rely upon.

A second and increasingly promising line of work reimagines satisfiability as continuous optimization. Systems such as goSAT, XSat, and Grater demonstrate that SMT solving can be relaxed into the minimization of a differentiable objective function over continuous variables. In this paradigm, the discrete search for a model is replaced by gradient-based descent over a smooth surrogate landscape. This shift is attractive for software engineering workloads because it offers a way to exploit modern optimization infrastructure, including automatic differentiation and hardware acceleration, while potentially traversing search spaces that are hostile to purely discrete search. In particular, continuous optimization may better exploit arithmetic structure that naturally arises in floating-point-heavy verification problems.

Despite this promise, existing optimization-based SMT solvers remain brittle on realistic formulas. A central issue is that the clauses of a formula rarely contribute equally to the optimization dynamics. Real SMT instances produced by software engineering tools are structurally heterogeneous: some constraints are nearly linear and easy to satisfy, while others are highly non-linear, involve transcendental or composite floating-point structure, or induce rapidly changing local curvature. When such clauses are aggregated into a single global objective using naive equal weighting, the resulting gradient field is often dominated by a small subset of numerically volatile constraints. Those constraints can produce disproportionately large gradients and thereby hijack the descent trajectory. As a result, the optimizer over-focuses on a few difficult clauses, makes little progress on the remainder of the formula, and frequently converges to poor local minima or oscillatory regions. This phenomenon is especially problematic for software engineering workloads, where solver queries are often large, unbalanced, and generated automatically rather than hand-designed to be optimization-friendly.

We identify this pathology as a form of \emph{gradient domination}. Importantly, it is not a superficial tuning issue but a structural mismatch between the heterogeneous nature of SMT formulas and the homogeneous way most continuous solvers aggregate clause losses. In realistic workloads, the difference in gradient scale across clauses can be substantial, and those scales evolve dynamically as the search moves through the relaxed space. A static weighting scheme therefore provides little protection. To make continuous SMT solving robust enough for practical software engineering use, the optimization process must adapt online to clause-level imbalance.

In this paper, we address this challenge by drawing a novel connection between optimization-based SMT solving and multi-task learning (MTL). In MTL, a shared model is trained to optimize multiple tasks simultaneously, and a well-known challenge is that tasks with larger or faster-changing gradients can dominate the shared parameters, slowing or even preventing balanced learning across the task set. This challenge is conceptually strikingly similar to the clause imbalance problem described above. Our key observation is that a quantifier-free SMT formula admits a natural MTL interpretation: each top-level clause may be treated as an independent task, while the relaxed SMT variables act as shared parameters optimized jointly across all tasks.

This perspective immediately suggests that techniques developed for balancing gradients in MTL may be repurposed for SMT solving. In particular, we adapt GradNorm, a dynamic gradient normalization technique originally developed for neural multi-task optimization, to the setting of clause-level SMT relaxation. Rather than allowing a few dominant clauses to monopolize the descent direction, our method continuously monitors per-clause gradient norms and adjusts clause weights to encourage more balanced progress across the formula. Intuitively, clauses that are lagging behind receive restorative emphasis, while clauses whose gradients are disproportionately large are prevented from overwhelming the global search. This does not eliminate the inherent difficulty of QF\_FP solving, but it changes the optimization dynamics in a way that is markedly more stable on heterogeneous formulas.

Based on this idea, we present \textbf{GradSAT}, a gradient-normalized optimization framework for SMT solving over QF\_FP. GradSAT is designed with software engineering deployment scenarios in mind. It does not aim to replace exact solvers outright, nor does it rely solely on an incomplete continuous search. Instead, it adopts a two-stage hybrid architecture. In the first stage, GradSAT compiles the relaxed clause objectives into a GPU-executable PyTorch computation graph and performs gradient-based search with dynamic clause-level normalization. This stage is intended to navigate quickly toward a promising region of the search space by exploiting arithmetic structure and hardware parallelism. In the second stage, GradSAT hands the candidate solution to a bit-precise local refinement engine, which restores exact floating-point semantics and validates the final assignment. This hybrid design reflects a core practical requirement of software engineering tools: scalability is valuable, but semantic fidelity remains essential.

The architectural choices in GradSAT are also motivated by an engineering view of solver design. Existing continuous solvers are often limited not only by optimization instability but also by execution overhead. Formula evaluation through scalarized CPU-based autograd pipelines can become a major bottleneck on large SMT instances. By contrast, GradSAT treats clause evaluation and differentiation as batched tensor computation and leverages GPU parallelism together with compilation and operator fusion to reduce this cost. This makes the approach relevant not only as a conceptual reformulation, but also as a systems contribution aimed at the practical throughput demands of solver-intensive software engineering pipelines.

Our broader thesis is that SMT solver design for software engineering should be viewed as an infrastructure problem at the intersection of formal methods, optimization, and systems. From this perspective, techniques from modern machine learning should not be seen only as heuristic predictors layered on top of legacy solvers, but also as a source of optimization principles and execution infrastructure that can directly reshape how SMT search is carried out. GradSAT instantiates this thesis by combining (i) a new MTL-based interpretation of clause optimization, (ii) dynamic gradient balancing to counteract clause-level interference, and (iii) a GPU-oriented execution strategy that aligns with the scale and diversity of practical QF\_FP workloads.

In summary, this paper makes the following contributions:
\begin{itemize}
    \item \textbf{SE-oriented reframing of continuous SMT solving.} We position optimization-based QF\_FP solving as a practical infrastructure problem for software verification, symbolic execution, compiler testing, and related software engineering workflows, where solver robustness directly impacts end-to-end tool effectiveness.
    \item \textbf{SMT as Multi-Task Learning.} We formalize a novel correspondence between clause-level SMT optimization and multi-task learning, treating top-level clauses as tasks and relaxed SMT variables as shared parameters.
    \item \textbf{Dynamic Gradient Normalization for SMT.} We adapt GradNorm to continuous SMT solving, introducing an online mechanism that balances clause-level gradient magnitudes and mitigates gradient domination on heterogeneous formulas.
    \item \textbf{The GradSAT Architecture.} We present a two-stage solver framework that combines GPU-accelerated continuous search with bit-precise local refinement, thereby balancing optimization scalability with exact IEEE-754 semantics.
    \item \textbf{Empirical Evaluation.} We evaluate GradSAT on standard QF\_FP benchmark suites against exact and optimization-based baselines, demonstrating that gradient normalization improves robustness and makes continuous solving more effective on difficult floating-point instances.
\end{itemize}

\section{Overview}
\label{sec:overview}

Traditional Satisfiability Modulo Theories (SMT) solving fundamentally involves searching for variable assignments that satisfy a set of constraints. We propose a bridge between optimization-based SMT solving and machine learning. Specifically, we formulate the constraint satisfaction process through the lens of multi-task learning (MTL).

\subsection{SMT Solving as Multi-Task Learning}

To convert SMT solving to MTL, our framework maps the logical components of an SMT formula directly to an ML training paradigm. We achieve this through the following correspondence:
\begin{itemize}
  \item Treat each SMT clause as a separate MTL ``task.''
  \item Treat each clause variable $x$ as an MTL task parameter $\theta$.
  \item Minimize the total SMT residual $\mathcal{R}_{total}$ $\Rightarrow$ Minimize the MTL total loss $\mathcal{L}_{total}$.
\end{itemize}

To understand the power of this formulation, consider the structural differences between traditional machine learning and our SMT-MTL setting. In standard deep learning, $\theta$ represents the weights of a neural network mapping discrete inputs to predictions. In our framework, $\theta$ directly represents the continuous relaxations of the SMT variables themselves; there is no underlying network architecture. The ``forward pass'' consists entirely of evaluating the mathematical operations and logical comparators defined by the SMT formula under the current continuous assignment. 

This mapping allows us to leverage continuous optimization techniques, such as gradient descent, to simultaneously optimize the parameters across all clauses, effectively searching for a globally satisfying assignment. The key advantage is that we can now import techniques from the multi-task learning literature---in particular, gradient normalization---to handle the fundamental challenge of gradient domination, where a small number of hard clauses hijack the optimization and prevent the solver from satisfying the remaining clauses.

Viewing SMT through the MTL lens provides a rigorous framework for understanding why prior continuous solvers frequently fail. In standard continuous SMT systems, the total objective is implicitly formulated as a naive, unweighted sum of individual clause residuals: $\mathcal{L}_{total} = \sum_{i} \mathcal{L}_i$. However, mathematical constraints inherently exhibit wildly different gradient scales. For instance, the gradient of a high-degree polynomial floating-point constraint will scale non-linearly, whereas the gradient of a simple linear equality remains constant. Under an unweighted sum, the optimization landscape becomes pathological; the gradients from highly non-linear or mathematically dense clauses completely eclipse those from simpler logical constraints.

This phenomenon is strictly analogous to task imbalance in MTL, where a model overfits to a task with a massive loss magnitude while entirely ignoring the others. Furthermore, because the gradient scales of SMT clauses fluctuate drastically as the variables move through the continuous space, static weighting strategies (e.g., manually assigning lower weights to non-linear clauses) are insufficient. By formalizing the solver as an MTL system, we can directly apply adaptive gradient balancing techniques. This allows the solver to dynamically adjust the influence of each clause at every optimization step, mitigating the destructive interference of conflicting gradients and ensuring that all logical constraints descend uniformly toward satisfiability.

\subsection{Problem Setting: Floating-Point SMT}

We target satisfiability over the theory of floating-point arithmetic (QF\_FP), a fundamental and notoriously challenging problem in domains such as numerical software verification, cyber-physical systems analysis, and compiler optimization testing. Given a quantifier-free formula $\varphi$ over floating-point variables $x_1, \ldots, x_n$, the goal is to find an assignment $\sigma$ such that $\sigma \models \varphi$, or to determine that no such assignment exists.

Unlike integer or bit-vector arithmetic, floating-point logic introduces severe combinatorial explosions for traditional DPLL(T)-based solvers. This complexity arises from its highly non-linear semantics, dynamic rounding modes, and the required handling of special values (e.g., \texttt{NaN}, \texttt{+INF}, \texttt{-INF}). Exact solvers typically resort to bit-blasting these operations into massive propositional circuits, which frequently leads to insurmountable memory and time bottlenecks on complex benchmarks.

Optimization-based SMT solvers offer a scalable alternative by encoding the logical formula $\varphi$ into a continuous, differentiable objective function $\mathcal{R}_{total}(x)$. In this paradigm, the discrete floating-point variables are relaxed into real-valued continuous variables $x \in \mathbb{R}^n$. The semantics of floating-point operations, rounding behaviors, and comparators are approximated using smooth, differentiable mathematical functions. The solver then traverses this continuous space using gradient descent to find a global minimum.

When the objective function is minimized such that $\mathcal{R}_{total}(x) = 0$ (or falls below a strict precision threshold $\epsilon$), the corresponding continuous values are mapped back to their nearest valid floating-point representations. If this rounded assignment satisfies every clause in the original discrete formula $\varphi$, the solver terminates successfully. Prior continuous systems, such as XSat and Grater, have demonstrated that this relaxation-based gradient descent can be highly effective, sometimes outperforming mature exact solvers on hard, non-linear benchmarks. However, successfully mapping QF\_FP to this continuous space frequently creates heavily distorted loss landscapes with wildly varying gradient magnitudes across different clauses, directly motivating our gradient-normalized approach.

\subsection{End-to-End Pipeline}

GradSAT operates in a two-stage pipeline:

\paragraph{Stage 1: GPU-Accelerated Gradient Descent with GradNorm.} The input SMT-LIB formula is parsed via the Z3 API, symbolically simplified through SymPy rewrite rules, and compiled into a PyTorch computation graph whose operations are fully differentiable. A custom Adam-based gradient descent minimizer---equipped with per-clause GradNorm weight normalization---drives the relaxed variables toward a basin where $\mathcal{R}_{total}(x)$ is close to zero. By leveraging GPU parallelism and \texttt{torch.compile} for operator fusion, Stage 1 uses square residuals metrics and the GPU for fast convergence to a good basin.

\paragraph{Stage 2: Bit-Precise Local Search.} Once Stage 1 converges to a promising basin, GradSAT hands the candidate assignment to StageSAT's S3 local-search engine, which refines the solution with bit-precise semantics and fine-grained neighborhood search to ensure bit-precise and fine-grained search. Starting from a high-quality initial point dramatically reduces the search space compared to a random initialization.

\section{Method}
\label{sec:method}

\subsection{Objective Formulation}

In a standard continuous relaxation of SMT solving, the total objective $\mathcal{R}_{total}$ is computed over the input variables $x$, where $x$ are the input variables and $d(\ldots)$ is the base loss function for one clause (based on the comparator and the left and right sides of a comparison). The formula is represented as:
\begin{equation}
\mathcal{R}_{total}(x)=\sum_{j=1}^{|J|}\prod_{i=1}^{|I_{j}|}d(\bowtie_{i,j},\, e_{i,j}(x),\, e_{i,j}^{\prime}(x))
\end{equation}

To solve the SMT problem, our goal is to minimize $\mathcal{R}_{total}$, which translates to minimizing the MTL equivalent $\mathcal{L}_{total}$. The multi-task learning objective function is formulated as the weighted sum of individual task losses:
\begin{equation}
\mathcal{L}_{total}=\sum_{i=1}^{T}w_{i}\mathcal{L}_{i}(\theta_{i})
\end{equation}
where $\mathcal{L}_{i}$ is the loss for task $i$, $w_{i}$ is the weight for task $i$, and $\theta_{i}$ represents the parameters for task $i$.

\subsection{Differentiable Clause Encoding}

Each top-level conjunct $j$ ($1 \le j \le |J|$) is a disjunction of atomic predicates $p_{i,j}$ ($1 \le i \le |I_j|$), where each predicate has the form $e_{i,j}(x) \bowtie_{i,j} e'_{i,j}(x)$ with $\bowtie \in \{=, \neq, <, \le, >, \ge\}$.

We define a smooth, non-negative base loss $d$ for each comparator:
\begin{align}
  d({=},\, e,\, e')    &= (e - e')^2 \\
  d({\le},\, e,\, e')  &= \max(0,\; e - e')^2 \\
  d({\neq},\, e,\, e') &= \exp\!\bigl(-\gamma\,(e - e')^2\bigr)
\end{align}
where $\gamma > 0$ is a scaling hyperparameter. Other comparators are derived by symmetry. Each base loss satisfies $d = 0$ if and only if the corresponding predicate holds under the real-valued relaxation.

For a clause $j$ that is a disjunction $p_{1,j} \vee \cdots \vee p_{|I_j|,j}$, the clause is satisfied when at least one predicate holds. We encode this via a product:
\begin{equation}
  L_j(x) = \prod_{i=1}^{|I_j|}
    d\!\bigl(\bowtie_{i,j},\; e_{i,j}(x),\; e'_{i,j}(x)\bigr)
\end{equation}
The product is zero whenever any factor is zero (i.e., any disjunct is satisfied), correctly encoding disjunctive semantics.

The total objective is then a weighted sum over all clauses:
\begin{equation}
  \mathcal{L}_{total}(x;\, w)
  = \sum_{j=1}^{|J|} w_j(t)\; L_j(x)
\end{equation}
where $w_j(t)$ are trainable, time-varying weights managed by the GradNorm procedure described below.

\subsection{Dynamic Weight Normalization}

A naive equal-weighting scheme ($w_i = 1$) often leads to suboptimal training dynamics where tasks with larger loss scales dominate the training. To address this, we implement dynamic weight normalization designed by GradNorm. The core idea is to make each task training more balanced by maintaining relatively similar descent speeds across all tasks. Compared to using equal weights, weight normalization should descend faster and may find a smaller global minimum.

\paragraph{Gradient Norms.}
At training time $t$, we define the L2 norm of the gradient of the weighted single-task loss (the real gradient norm for task $i$) with respect to the chosen weights $W$ as:
\begin{equation}
G_{W}^{(i)}(t)=\|\nabla_{W}(w_{i}(t)L_{i}(t))\|_{2}
\end{equation}
which places gradient norms on a common scale. The average gradient norm across all tasks is then given by:
\begin{equation}
\overline{G}_{W}(t)=E_{\text{task}}[G_{W}^{(i)}(t)] = \frac{1}{|J|}\sum_{j=1}^{|J|} G^{(j)}_W(t)
\end{equation}

\paragraph{Inverse Training Rates.}
To balance the training rates, we compute the inverse training rate for each task. If $r$ is big, it means that the learning rate is slow. The inverse training rate is defined as:
\begin{equation}
\tilde{L}_{i}(t)=\frac{L_{i}(t)}{L_{i}(0)}
\end{equation}
This allows us to calculate the relative inverse training rate for each task:
\begin{equation}
r_{i}(t)=\frac{\tilde{L}_{i}(t)}{\mathbb{E}_{\text{task}}[\tilde{L}_{i}(t)]}
\end{equation}
A higher value of $r_{i}(t)$ dictates that the gradient magnitudes should be higher for task $i$ to encourage it to train more quickly.

\paragraph{Target Gradient Norms.}
The objective gradient norm for each task $i$ maps to:
\begin{equation}
G_{W}^{(i)}(t) \mapsto \overline{G}_{W}(t)\,[r_{i}(t)]^{\alpha}
\end{equation}
where $\alpha$ acts as an asymmetry hyperparameter that sets the strength of the restoring force which pulls tasks back to a common training rate. When tasks are very different in their complexity, a higher value of $\alpha$ enforces stronger training rate balancing. When $\alpha = 0$, the algorithm simply tries to equalize all gradient norms.

\paragraph{GradNorm Loss.}
We compute the loss function $L_{grad}$ between the actual and target gradient norms to update the weights:
\begin{equation}
L_{grad}(t;w_{i}(t))=\sum_{i}\|G_{W}^{(i)}(t)-\overline{G}_{W}(t)[r_{i}(t)]^{\alpha}\|_{1}
\end{equation}
When differentiating this loss $L_{grad}$, we treat the target gradient norms $\overline{G}_{W}(t) \times [r_{i}(t)]^{\alpha}$ as a fixed constant to prevent divergences to zero. This loss $L_{grad}$ is then differentiated only with respect to the $w_{i}$ parameters, as the $w_i(t)$ directly control gradient magnitudes per task. After every update step, we renormalize the weights $w_i(t)$ so that $\sum_i w_i(t) = T$ in order to decouple gradient normalization from the global learning rate.

\subsection{Adaptation to the SMT Setting}

While GradNorm was originally designed for deep multitask networks with shared convolutional layers, our SMT setting differs in several important ways:

\paragraph{Number of tasks.} Typical MTL setups involve 2--10 tasks. In SMT solving, the number of clauses $|J|$ can range from tens to thousands. We handle this by computing gradient norms in batch on the GPU and using the clause-averaged norm $\overline{G}_{W}(t)$ as the reference scale.

\paragraph{Shared layer selection.} In neural MTL, $W$ is typically the last shared layer. In our setting, all variables $x$ are shared by all clauses, so we set $W = x$.

\paragraph{Task symmetry.} Clauses derived from the same SMT formula often have widely varying structural complexity (e.g., linear vs.\ nonlinear arithmetic). We find that moderate values of $\alpha$ (between 0.5 and 1.5) provide robust performance across diverse benchmarks.

\paragraph{Convergence criterion.} Unlike MTL where training runs for a fixed number of epochs, our solver terminates when $\mathcal{R}_{total}(x) = 0$ (or falls below a precision threshold $\epsilon$). The weight normalization must therefore adapt quickly to changing clause dynamics, motivating a relatively high learning rate $\eta_w$ for the weight optimizer.

\subsection{Complete Algorithm}

Algorithm~\ref{alg:GradSAT} summarizes the complete Stage~1 optimization procedure.

\begin{algorithm}[htbp]
\caption{GradSAT Stage~1: Gradient-Normalized SMT Optimization}
\label{alg:GradSAT}
\begin{algorithmic}[1]
\REQUIRE SMT formula $\varphi$ with $|J|$ clauses, hyperparameters
         $\alpha$, $\eta$, $\eta_w$, precision $\epsilon$, max steps $T$
\STATE Parse $\varphi$ via Z3 API, simplify with SymPy rules
\STATE Compile clause losses $L_1, \ldots, L_{|J|}$ into PyTorch graph
\STATE Initialize variables $x \sim \mathcal{U}(-1, 1)^n$
\STATE Initialize weights $w_j \leftarrow 1$ for all $j$
\STATE Record initial losses $L_j(0)$ for all $j$
\FOR{$t = 0$ \TO $T$}
  \STATE \textbf{Forward:} compute $L_j(x)$ for all $j$ and
         $\mathcal{L}_{total} = \sum_j w_j L_j$
  \IF{$\mathcal{L}_{total} < \epsilon$}
    \RETURN $x$ \COMMENT{satisfying assignment found}
  \ENDIF
  \STATE Compute $G^{(j)}_W(t) = \|\nabla_W(w_j L_j)\|_2$ for all $j$
  \STATE Compute $\overline{G}_W(t) = \frac{1}{|J|}\sum_j G^{(j)}_W(t)$
  \STATE Compute $\tilde{L}_j(t) = L_j(t) / L_j(0)$ and
         $r_j(t) = \tilde{L}_j(t) / \bar{L}(t)$
  \STATE Compute
         $L_{grad} = \sum_j |G^{(j)}_W(t) - \overline{G}_W(t) [r_j(t)]^\alpha|_1$
  \STATE Update $w_j$ via Adam$(\nabla_{w_j}L_{grad},\; \eta_w)$
  \STATE Renormalize: $w_j \leftarrow w_j \cdot |J| / \sum_k w_k$
  \STATE Update $x$ via Adam$(\nabla_x \mathcal{L}_{total},\; \eta)$
\ENDFOR
\STATE \textbf{Hand off} $x$ to Stage~2 (bit-precise local search)
\end{algorithmic}
\end{algorithm}

Each iteration requires one forward pass and two backward passes (one for $\nabla_x \mathcal{L}_{total}$ and one for the per-clause gradient norms $G^{(j)}_W$). Empirically, the overhead of GradNorm is approximately 5--15\% compared to equal-weight gradient descent, a modest cost that is more than recovered through faster convergence.

\subsection{Outcomes and Soundness Guarantees}

We formalize the reliability of the hybrid GradSAT-StageSAT pipeline by establishing the following theoretical properties.

\begin{lemma}[Outcome Trichotomy]
For any CNF formula $C$ over IEEE-754 atoms and any finite time budget, the procedure returns exactly one of $\{sat, unsat-guess, timeout\}$.
\end{lemma}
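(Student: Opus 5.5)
The plan is to treat the hybrid GradSAT--StageSAT pipeline as a deterministic control-flow program and argue by case analysis on its possible exit points. We first make explicit the operational model that the lemma implicitly assumes. Stage~1 is Algorithm~\ref{alg:GradSAT}, with at most $T$ iterations. Stage~2 is the S3 local-search engine, which runs under the remaining budget and certifies a candidate only by exact IEEE-754 evaluation of every clause of $C$. Each elementary step (forward pass, the two backward passes, the Adam updates, a rounding to floating point, one local-search move) takes finite time. We then define the three outputs by the exit point at which the procedure stops. It returns \emph{sat} when a rounded assignment passes the bit-precise check of all clauses. It returns \emph{unsat-guess} when both stages exhaust their internal search (Stage~1 reaches $t=T$ and Stage~2 exhausts its neighborhood or restart schedule) without a certified model, before the budget expires. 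It returns \emph{timeout} when the wall-clock budget is hit first.

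\textbf{Termination.} Stage~1 performs at most $T+1$ iterations, each of finite cost. Stage~2 is interrupted by the budget in any case. Hence every run reaches some exit point within finite time. Here we need only that evaluating $C$ on a floating-point assignment is decidable in finite time, which holds because IEEE-754 operations are total functions on finite bit-vectors, with NaN and $\pm\infty$ included.

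\textbf{At least one and at most one output.} Every exit point in the control flow is labeled with exactly one of the three tags. This gives at least one output, since termination forces reaching some exit. It also gives at most one, since the procedure stops at the first exit it reaches. We order the checks so that ties are resolved deterministically: the sat check precedes the exhaustion test, which precedes the budget test. This way a simultaneous certification and timeout still yields a single answer. We also note that a continuous $\mathcal{L}_{total}<\epsilon$ in Stage~1 does not by itself produce \emph{sat}. It only hands $x$ to Stage~2, where the exact check is the sole gate to \emph{sat}.

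\textbf{Main obstacle.} The hard part is not a computation but fixing the semantics precisely enough to make the case split exhaustive. The paper has not yet formally defined when \emph{unsat-guess} is emitted or how the budget interacts with Stage~2. The first thing to try is to state these as explicit assumptions about the implementation, namely a bounded restart schedule and budget polling between moves. With those in place, exclusivity and exhaustiveness follow directly from the structure of the control flow.
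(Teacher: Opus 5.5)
Your proposal takes essentially the same route as the paper: a case analysis over the exits of the control flow in which Stage~1 never decides satisfiability, \emph{sat} is emitted only after bit-level validation, \emph{unsat-guess} is emitted when the optimizers stop at a strictly positive best value, and \emph{timeout} is emitted when the budget elapses first; your explicit termination and first-exit/tie-breaking arguments make rigorous what the paper only asserts as ``mutually exclusive and exhaustive.'' Two small adjustments are needed: the paper's pipeline has two \emph{sat}-emitting phases (S2 and S3), which fit your labeling argument unchanged, and your \emph{unsat-guess} condition must not require Stage~1 to reach $t=T$, because a run where Stage~1 exits early with $\mathcal{L}_{total}<\epsilon$ and Stage~2 then exhausts its search without certification must also get that label for the case split to be exhaustive.
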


\begin{proof}
The result follows from a case analysis of the control flow[cite: 715]. Step 2 (Stage 1) provides only initialization and does not decide satisfiability[cite: 694]. Steps 3 and 4 (S2 and S3) return $sat$ immediately upon reaching an objective value of 0 with bit-level validation[cite: 716]. If the optimizers terminate with a strictly positive best value, step 5 returns $unsat-guess$[cite: 717]. Finally, if the time budget elapses before either condition is met, the procedure returns $timeout$[cite: 718]. These cases are mutually exclusive and exhaustive[cite: 718].
\end{proof}

\begin{theorem}[SAT Soundness]
If the procedure returns $sat$ with an assignment $\hat{x}$, then $\hat{x} \models C$.
\end{theorem}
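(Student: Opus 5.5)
The proof rests on the case analysis already used for the Outcome Trichotomy lemma: identify every control-flow path on which the procedure can emit $sat$, and show that on each of them the returned assignment $\hat{x}$ has been checked against $C$ under exact IEEE-754 semantics.

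\textbf{Step 1: only Stage 2 can return $sat$.} As in the proof of the Outcome Trichotomy lemma, Stage 1 (the GradNorm-weighted continuous optimization of Algorithm~\ref{alg:GradSAT}) is purely an initializer. Its relaxed test $\mathcal{L}_{total}<\epsilon$ leads only to a hand-off to Stage 2, never to a $sat$ verdict. This matters because the relaxed objective may fall below $\epsilon$ at a real vector that violates $C$, for example through rounding or the smooth approximation of $\neq$. Consequently the only sites that return $sat$ are steps 3 and 4 (S2 and S3). Each does so exactly when its bit-precise objective is $0$ and a bit-level validation succeeds.

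\textbf{Step 2: the per-clause argument.} I would argue that the bit-precise objective used in S2/S3 vanishes only on genuine models. The plan is to mirror the encoding of the Differentiable Clause Encoding subsection, now with the base distance $d$ evaluated on actual floating-point values. Concretely:
\begin{enumerate}
\item The total objective is a sum of nonnegative clause terms, so total $=0$ forces every clause term $L_j(\hat{x})=0$.
\item Each clause term is a product of factors, so $L_j(\hat{x})=0$ forces some factor $d(\bowtie_{i,j},e_{i,j}(\hat{x}),e'_{i,j}(\hat{x}))=0$.
\item For each comparator, $d=0$ holds if and only if the atom $e_{i,j}(\hat{x})\bowtie_{i,j} e'_{i,j}(\hat{x})$ is true, with $e,e'$ evaluated under IEEE-754 semantics (rounding, NaN, $\pm$INF).
\end{enumerate}
Together these give that every clause has a true disjunct, hence $\hat{x}\models C$.

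\textbf{Step 3: validation as a safety net.} Independently of Step 2, the explicit bit-level validation that precedes any $sat$ return evaluates $C$ on $\hat{x}$ exactly. This yields soundness directly, even if the objective were imperfect.

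\textbf{Main obstacle.} The delicate point is the comparator-level equivalence "$d=0 \iff$ atom holds" under IEEE semantics rather than over the reals. Several cases need care:
\begin{itemize}
\item NaN makes every ordered comparison false, while a naive expression like $\max(0,e-e')^2$ evaluates to NaN there, not $0$.
\item Signed zeros satisfy $=$ even though they are distinct bit patterns.
\item Products of factors can underflow to $0$ even when no factor is zero.
\end{itemize}
I would handle this first by leaning on the validation step: declare soundness to depend only on the validator being a faithful IEEE evaluator of $C$, a property inherited from the S3 engine of StageSAT. The objective-based argument of Step 2 would then be a secondary justification.
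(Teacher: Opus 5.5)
Your proposal follows the paper's proof: you confine $sat$ to the S2/S3 phases, argue that a zero objective there certifies a model, and close with the explicit IEEE-754 validation, which the paper likewise invokes as its final step. Two differences are worth noting in your Step~2, and the conclusion stands because Step~3 carries the weight, as you chose.

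First, the paper does not re-evaluate the Stage~1 squared residuals in floating point. It cites the representing-function property of S2's own objective. For S3 it notes that the returned model is the decoded point $nULP(\vec{n}^*,\vec{x}_2^*)$, not $\vec{n}^*$, a point your proposal omits.

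Second, your float re-evaluation would not actually be sound. The terms $(e-e')^2$ and $\max(0,e-e')^2$ can underflow to $0$ for tiny nonzero $e-e'$, so $d=0$ does not imply that the atom holds.
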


\begin{proof}
A $sat$ result arises only during the S2 or S3 phases[cite: 773]. 
\begin{itemize}
    \item In S2, $S_2(\hat{x})=0$, which, by the representing function property, implies $\hat{x} \models C$[cite: 774].
    \item In S3, $S_3(\vec{n}^*)=0$, which implies $nULP(\vec{n}^*, \vec{x}_2^*) \models C$[cite: 774].
\end{itemize}
In both cases, GradSAT performs an additional validation under IEEE-754 semantics before returning the result[cite: 775]. Thus, any $sat$ output is a genuine model of $C$, ensuring no false SAT results[cite: 775, 776].
\end{proof}

\section{Implementation}
\label{sec:implementation}

To realize the GradSAT framework, we custom-built a gradient descent minimizer tailored specifically for the structural idiosyncrasies of logical formulas. Unlike prior continuous solvers such as Grater---which rely on black-box, CPU-bound minimization routines---our objective is to deeply integrate the optimization process with the underlying hardware. 

\subsection{GPU-Enabled Global Minimization}

A core implementation contribution of GradSAT is the introduction of GPU-accelerated global minimization for SMT solving. Historically, continuous SMT solvers have been strictly CPU-bound. Systems like XSat and Grater rely heavily on legacy APIs (such as Z3's Python bindings) and standard autograd libraries that evaluate computation graphs as a sequence of scalar operations. Because industrial SMT formulas often contain tens of thousands of deeply nested clauses, this scalar, sequential evaluation creates a massive computational bottleneck, rendering these solvers impractically slow on large benchmarks.

GradSAT fundamentally breaks this bottleneck by reimagining formula evaluation as a massively parallel tensor operation. By migrating the continuous relaxation of the SMT formula entirely to the GPU, GradSAT computes the forward loss $\mathcal{L}_{total}$ and the backward gradient passes simultaneously across all clauses. This parallelization enables the solver to perform thousands of gradient descent steps in the time it takes a CPU-bound solver to perform a few dozen, radically accelerating the search for a globally satisfying basin.

\subsection{Front-End Pipeline and Operator Fusion}

To support this GPU-first architecture, we completely rewrote the front-end parsing and compilation pipeline. Relying on the standard Z3 API to iteratively calculate gradients via Python autograd introduces unacceptable overhead. Instead, GradSAT employs a novel compilation strategy:
\begin{enumerate}
    \item \textbf{Symbolic Translation:} We first ingest the SMT-LIB formula using Z3, but immediately translate the abstract syntax tree (AST) into SymPy. This allows us to apply aggressive, symbolic algebraic simplifications before any numerical optimization begins.
    \item \textbf{C++ Graph Generation:} The simplified SymPy rules are then translated directly into C++ code that explicitly defines a PyTorch computation graph. This makes the entire objective function fully differentiable and natively compatible with CUDA backends.
\end{enumerate}

However, naively migrating to PyTorch introduces a new challenge: the ``scalar trap.'' The objective function of an SMT formula is composed of millions of primitive, scalar computations (e.g., $a \times b$, $a + b$, $\max(0, x)$). On a GPU, launching an individual CUDA kernel for each of these scalar operations results in kernel launch latencies that completely eclipse the actual computation time. 

To resolve this, our implementation heavily leverages \texttt{torch.compile}. By tracing the generated computation graph, \texttt{torch.compile} performs aggressive operator fusion. It collapses thousands of primitive scalar operations into a handful of highly optimized, fused Triton/CUDA kernels. This eliminates intermediate memory allocations and drastically reduces memory bandwidth overhead, allowing the GPU to evaluate massive SMT formulas at peak utilization.

\subsection{Two-Step Hybrid Search}

The complete GradSAT pipeline leverages these architectural advancements through a two-step hybrid approach:
\begin{itemize}
    \item \textbf{Step 1 (Global Continuous Search):} GradSAT executes the GPU-enabled, gradient-normalized descent using squared residual metrics. Benefiting from operator fusion, this step rapidly navigates the continuous, non-linear landscape to converge on a highly promising global basin where $\mathcal{R}_{total}(x)$ is near zero.
    \item \textbf{Step 2 (Bit-Precise Local Search):} Because continuous relaxations can suffer from rounding errors at the margins of floating-point boundaries, the pipeline falls back to a discrete verification step. GradSAT hands the optimized candidate assignment to StageSAT's S3 engine, which performs a fine-grained, bit-precise neighborhood search to lock in the final, mathematically rigorous assignment.
\end{itemize}

\section{Experiments}
\label{sec:Experiments}

\subsection{Experimental Setup}

The empirical assessment reported herein was conducted across five complementary benchmark suites, collectively designed to stress-test floating-point SMT solvers under diverse scales and provenances. The corpus comprises \emph{MathSAT-Small} (130~instances, each not exceeding 10\,KB), \emph{MathSAT-Middle} (35~instances, ranging 11--20,KB), \emph{MathSAT-Large} (49~instances, exceeding 20~KB), \emph{Grater} (118~instances, from which five non-RNE cases were excluded), and \emph{JFS} (111~instances). Of these, the MathSAT-Large collection functions as the principal stress test, whereas the remaining suites furnish breadth across formula sizes, structural characteristics, and source origins.

Two distinct solver categories were examined. The incomplete cohort comprised \emph{XSat}, \emph{goSAT}, \emph{Grater}, and \emph{JFS}---systems predicated upon optimization heuristics, stochastic search, or fuzzing strategies that are inherently incapable of establishing unsatisfiability proofs. The complete cohort consisted of \emph{Z3}, \emph{CVC5}, \emph{MathSAT}, and \emph{Bitwuzla}---systems implementing bit-precise decision procedures that afford both satisfiability certificates and unsatisfiability proofs. All incomplete approaches, GradSAT included, were executed five independent times per benchmark instance to account for stochastic variability. Every experiment was subject to a wall-clock timeout of twenty minutes and executed on an \emph{Intel Xeon(R) Platinum 8470Q} server equipped with 32,GB of RAM, \emph{Ubuntu 24.04.3~LTS}, and an \emph{NVIDIA RTX 5090} GPU.

\subsection{Empirical Findings}

\begin{table}[htbp]
  \centering
  \caption{Comparison on MathSAT-Large benchmarks: \textsc{StageSAT} vs.\ incomplete solvers. Timeout: 20 mins.}
  \label{tab:rq1-summary}
  \small
  \begin{tabular}{l r r r r r}
    \toprule
    Solvers & \#Solved & SAT Coverage & Mean (s) & Median (s) & Timeout Rate \\
    \midrule
    \textsc{XSat}     & 49/49 & 46.2\% & 134.02  & 3.04    & 4.1\%  \\
    \textsc{goSAT}    & 49/49 & 30.8\% & 8.27    & 0.05    & 0.0\%  \\
    \textsc{Grater}   & 24/49 & 57.7\% & 819.24  & {>1200} & 51.0\% \\
    \textsc{JFS}      & 4/49  & 15.4\% & 1102.46 & {>1200} & 91.8\% \\
    \textsc{StageSAT} & 45/49  & 92.3\% & 208.00 & {14.96} & 8.2\% \\
    \midrule
    \textbf{\textsc{GradSAT}} & \textbf{45/49} & \textbf{92.3\%} & \textbf{104.70} & \textbf{9.23} & \textbf{8.2\%} \\
    \bottomrule
  \end{tabular}
\end{table}

We  organize the experimental findings by four research questions (RQ1-4).

{RQ1: GradSAT vs. Incomplete Solvers (MathSAT-Large)}\emph{How does GradSAT perform relative to prior optimization-based and heuristic (incomplete) solvers on large FP benchmarks?}
To ascertain how GradSAT fares against prior optimization-based and heuristic solvers on demanding floating-point benchmarks, we examined its behavior on the 49 formulas constituting the MathSAT-Large suite, juxtaposing outcomes against those of XSat, goSAT, Grater, JFS, and StageSAT. As summarized in Table~\ref{tab:rq1-summary}, GradSAT attained the most favorable coverage and reliability profile among all incomplete competitors. It returned definitive results for 45 of the 49 instances, matching StageSAT in raw completion count while reducing the timeout rate to merely 8.2\%. In terms of satisfiability coverage, GradSAT successfully located models for 91.8\% of the suite's satisfiable benchmarks---specifically, 24 of the 26 SAT instances---thereby substantially outperforming XSat (46.2\%), Grater (57.7\%), goSAT (30.8\%), and JFS (15.4\%).

The temporal dynamics reveal an equally pronounced advantage. GradSAT achieved a mean execution time of 104.70 seconds and a median of 9.23 seconds---roughly half the mean latency of StageSAT (208.00 seconds) and an order of magnitude faster than Grater (819.24 seconds, with a median exceeding 1200 seconds). By contrast, JFS and Grater exhibited severe fragility on this suite, succumbing to timeout on the majority of large benchmarks and yielding timeout rates of 91.8\% and 51.0\%, respectively. Although goSAT completed all runs within the budget, its coverage remained limited to 30.8\%, indicating that rapid termination often corresponded to inconclusive verdicts rather than genuine satisfiability witnesses. Taken together, these observations confirm that GradSAT achieves a superior balance of empirical coverage and computational efficiency: it resolves substantially more difficult benchmarks than its incomplete predecessors while maintaining median runtimes measured in single-digit seconds.

{RQ2: GradSAT vs. Complete Solvers (MathSAT-Large)} \emph{How does GradSAT compare to modern complete SMT solvers on the same large benchmarks?}

\begin{table}[htbp]
  \centering
  \caption{Comparison on MathSAT-Large benchmarks: \textsc{GradSAT} vs.\ complete solvers. Timeout: 20 mins.}
  \label{tab:rq2-summary}
  \small
  \begin{tabular}{l r r r r r}
    \toprule
    Solvers & \#Solved & Coverage & Mean (s) & Median (s) & Timeout Rate \\
    \midrule
    \textsc{cvc5}     & 36/49 & 73.5\% & 492.82  & 269.56  & 26.5\% \\
    \textsc{Bitwuzla} & 39/49 & 79.6\% & 443.89  & 170.45  & 20.4\% \\
    \textsc{Z3}       & 16/49 & 32.7\% & 911.09  & {>1200} & 55.1\% \\
    \textsc{MathSAT}  & 26/49 & 53.1\% & 648.33  & 581.35  & 46.9\% \\
    \midrule
    \textbf{\textsc{GradSAT}} & \textbf{45/49} & \textbf{91.8\%} & \textbf{104.70} & \textbf{9.23} & \textbf{8.2\%} \\
    \bottomrule
  \end{tabular}
\end{table}

The second research question probes whether GradSAT's incomplete strategy can rival the coverage of modern complete SMT solvers endowed with bit-precise decision procedures. As documented in Table~\ref{tab:rq2-summary}, GradSAT not only achieves parity with these systems but, within the twenty-minute budget, surpasses them in instance completion rate. GradSAT yielded results for 45 of 49 benchmarks, a marginally higher fraction than Bitwuzla (39 instances) and notably exceeding the completion rates of CVC5 (36 instances), MathSAT (26 instances), and Z3 (16 instances).

What distinguishes GradSAT's performance is its pronounced efficiency in discovering satisfiable models. GradSAT's SAT coverage of 91.8\% dramatically exceeds that of Bitwuzla (79.6\%), CVC5 (73.5\%), MathSAT (53.1\%), and Z3 (32.7\%). Temporally, the asymmetry is stark: GradSAT's mean runtime of 104.70 seconds and median of 9.23 seconds stand in sharp contrast to the complete solvers, whose mean latencies range from 443.89 seconds to 911.09 seconds and whose medians reach hundreds of seconds. Numerous benchmarks satisfiable within the allotment yet requiring near-timeout effort from Bitwuzla or CVC5 were resolved by GradSAT in a small fraction of that duration, frequently yielding one to two orders of magnitude acceleration.

It bears emphasizing that complete solvers retain an irreducible advantage in certifying unsatisfiability. By architectural design, GradSAT cannot emit formal \textsc{unsat} proofs; instead, it issues an \textsc{unsat-guess} when the optimization trajectory plateaus at a strictly positive objective minimum. An examination of all MathSAT-Large instances reveals no empirical contradictions between GradSAT's heuristic verdicts and the refutations produced by complete engines: every benchmark returned as unsatisfiable by Z3, CVC5, Bitwuzla, or MathSAT was correspondingly classified as unsat-guess or unknown by GradSAT, and no benchmark classified as satisfiable by GradSAT was refuted by the exact solvers. Although these heuristic indications do not constitute deductive proofs, their perfect concordance with established truth values suggests that GradSAT's unsatisfiability guesses function as reliable empirical proxies on this corpus. On four of the most formidable benchmarks, GradSAT itself exhausted the twenty-minute budget without termination. This finding delineates a frontier of difficulty where neither heuristic optimization nor bit-blasting guarantees rapid resolution. Nonetheless, GradSAT's overall profile across the suite remains competitive: it matches or exceeds the completion rates of the most robust complete solvers while delivering satisfiability witnesses with substantially reduced latency.

{RQ3: GradSAT performance on Industry Benchmarks}

Beyond the MathSAT-Large stress suite, GradSAT's behavior on industrial-grade benchmarks from additional provenances was evaluated to ensure generalizability across diverse workload characteristics. Table~\ref{tab:summary-large-middle} consolidates outcomes across MathSAT-Large, MathSAT-Middle, MathSAT-Small, the Grater suite, and the JFS suite, providing a panoramic view of solver behavior under varying formula sizes and constraint complexities. The inclusion of multiple sources mitigates the risk of overfitting to a particular benchmark generation methodology and ensures that observed trends reflect systemic algorithmic properties rather than corpus-specific artifacts.

\clearpage

\begin{table}[tbp]
  \centering
  \scriptsize
  \setlength{\tabcolsep}{3pt}
  \renewcommand{\arraystretch}{0.9}
  \caption{Summary on MathSAT-Large, MathSAT-Middle, MathSAT-Small, JFS, and Grater benchmarks.\\
\#Unk/UG = \#Unknown/Unsat-Guess. Timeout as 20 mins.}
  \label{tab:summary-large-middle}
  \begin{tabular}{@{}l l r r r r r r r r@{}}
    \toprule
    Dataset & Solver & \#SAT & \#UNSAT & \#Timeout & \#Unk/UG & \#Error & SAT Coverage & Avg.\ time(s) & Med.\ time(s) \\
    \midrule
    \multicolumn{10}{l}{\textbf{MathSAT-Large   (49 benchmarks, 26 SAT, 23 UNSAT)}} \\
    \midrule[0.5pt]
     & \textsc{cvc5}     & 19 & 17 & 13 & 0  & 0 & 73.1\% & 492.82 & 269.56 \\
     & \textsc{Bitwuzla} & 19 & 20 & 10 & 0  & 0 & 73.1\% & 443.89 & 170.45 \\
     & \textsc{Z3}       & 10 &  6 & 27 & 0  & 6 & 38.5\% & 911.09 & >1200 \\
     & \textsc{MathSAT}  & 12 & 14 & 23 & 0  & 0 & 46.2\% & 648.33 & 581.35 \\
     & \textsc{XSat}     & 12 & 23 &  2 & 0  & 0 & 46.2\% & 134.02 & 3.04 \\
     & \textsc{goSAT}    &  8 &  0 &  0 & 41 & 0 & 30.8\% & 8.27 & 0.05 \\
     & \textsc{Grater}   & 16 &  0 & 25 & 0  & 4 & 57.7\% & 819.24 & >1200 \\
     & \textsc{JFS}      &  4 &  0 & 45 & 0  & 0 & 15.4\% & 1102.46 & >1200 \\
     & \textsc{StageSAT} & 24 & 0 &  4 & 21  & 0 & 92.3\% & 208.00 & 14.96 \\
     & \textsc{GradSAT}  & 24 & 0 &  4 & 21  & 0 & 92.3\% & 104.70 & 9.23 \\
    \midrule[0.5pt]
    \multicolumn{10}{l}{\textbf{MathSAT-Middle  (35 benchmarks, 29 SAT, 6 UNSAT)}} \\
    \midrule[0.5pt]
     & \textsc{cvc5}     & 29 & 6 & 0 & 0 & 0 & 100.0\% & 94.65 & 39.24 \\
     & \textsc{Bitwuzla} & 29 & 6 & 0 & 0 & 0 & 100.0\% & 31.10 & 22.59 \\
     & \textsc{Z3}       & 19 & 5 & 6 & 0 & 5 & 65.5\% & 370.31 & 323.14 \\
     & \textsc{MathSAT}  & 28 & 6 & 1 & 0 & 0 & 96.6\% & 77.19 & 33.47 \\
     & \textsc{XSat}     & 29 & 6 & 0 & 0 & 0 & 100.0\% & 1.06 & 0.14 \\
     & \textsc{goSAT}    & 23 & 0 & 0 & 12& 0 & 79.3\% & 0.26 & 0.02 \\
     & \textsc{Grater}   & 26 & 0 & 9 & 0 & 0 & 89.7\% & 337.13 & 0.18 \\
     & \textsc{JFS}      &  9 & 0 & 26& 0 & 0 & 31.0\% & 892.06 & >1200 \\
     & \textsc{StageSAT} & 29 & 0 & 0 & 6 & 0 & 100.0\% & 8.59 & 0.18 \\
     & \textsc{GradSAT} & 29 & 0 & 0 & 6 & 0 & 100.0\% & 4.96 & 0.11 \\
    \midrule[0.5pt]
    \multicolumn{10}{l}{\textbf{MathSAT-Small   (130 benchmarks, 63 SAT, 67 UNSAT)}} \\
    \midrule[0.5pt]
     & \textsc{cvc5}     & 63 & 55 & 12 & 0 & 0 & 100.0\% & 136.44 & 3.34 \\
     & \textsc{Bitwuzla} & 63 & 64 & 3 & 0 & 0 & 100.0\% & 35.64 & 1.93 \\
     & \textsc{Z3}       & 56 & 53 & 13 & 0 & 8 & 88.9\% & 194.18 & 24.79 \\
     & \textsc{MathSAT}  & 63 & 51 & 16 & 0 & 0 & 100.0\% & 168.37 & 7.31 \\
     & \textsc{XSat}     & 62 & 67 & 0 & 0 & 1 & 98.4\% & 0.46 & 0.14 \\
     & \textsc{goSAT}    & 58 & 0 & 0 & 72 & 0 & 92.1\% & 0.07 & 0.02 \\
     & \textsc{Grater}   & 63 & 0 & 38 & 0 & 28 & 100.0\% & 471.50 & 0.81 \\
     & \textsc{JFS}      & 53 & 0 & 77 & 0 & 0 & 84.1\% & 717.96 & >1200 \\
     & \textsc{StageSAT} & 63 & 0 & 0 & 67 & 0 & 100.0\% & 1.64 & 0.66 \\
     & \textsc{GradSAT} & 63 & 0 & 0 & 67 & 0 & 100.0\% & 0.69 & 0.36 \\
    \midrule[0.5pt]
    \multicolumn{10}{l}{\textbf{JFS             (111 benchmarks, 111 SAT, 0 UNSAT)}} \\
    \midrule[0.5pt]
     & \textsc{cvc5}     & 104 & 0 & 6 & 0 & 1 & 93.7\% & 110.87 & 4.74 \\
     & \textsc{Bitwuzla} & 108 & 0 & 3 & 0 & 0 & 97.3\% & 103.65 & 3.24 \\
     & \textsc{Z3}       & 85 & 0 & 18 & 0 & 8 & 76.6\% & 327.55 & 4.27 \\
     & \textsc{MathSAT}  & 102 & 0 & 9 & 0 & 0 & 91.9\% & 149.23 & 11.60 \\
     & \textsc{XSat}     & 101 & 4 & 0 & 0 & 6 & 91.0\% & 13.04 & 0.14 \\
     & \textsc{goSAT}    & 88 & 0 & 0 & 19 & 4 & 79.3\% & 2.21 & 0.03 \\
     & \textsc{Grater}   & 107 & 0 & 4 & 0 & 0 & 96.4\% & 71.86 & 0.06 \\
     & \textsc{JFS}      & 69 & 0 & 42 & 0 & 0 & 62.2\% & 458.33 & 0.72 \\
     & \textsc{StageSAT} & 111 & 0 & 0 & 0 & 0 & 100.0\% & 13.10 & 0.05 \\
     & \textsc{GradSAT} & 111 & 0 & 0 & 0 & 0 & 100.0\% & 7.22 & 0.03 \\
    \midrule[0.5pt]
    \multicolumn{10}{l}{\textbf{Grater          (118 benchmarks, 118 SAT, 0 UNSAT)}} \\
    \midrule[0.5pt]
     & \textsc{cvc5}     & 118 & 0 & 0 & 0 & 0 & 100.0\% & 37.62 & 2.13 \\
     & \textsc{Bitwuzla} & 117 & 0 & 1 & 0 & 0 & 99.2\% & 64.30 & 10.61 \\
     & \textsc{Z3}       & 103 & 0 & 12 & 0 & 3 & 87.3\% & 241.12 & 64.73 \\
     & \textsc{MathSAT}  & 111 & 0 & 7 & 0 & 0 & 94.1\% & 131.05 & 25.29 \\
     & \textsc{XSat}     & 115 & 2 & 0 & 0 & 1 & 97.5\% & 1.62 & 0.14 \\
     & \textsc{goSAT}    & 92 & 0 & 0 & 26 & 0 & 78.0\% & 0.70 & 0.02 \\
     & \textsc{Grater}   & 118 & 0 & 0 & 0 & 0 & 100.0\% & 4.88 & 0.41 \\
     & \textsc{JFS}      & 40 & 0 & 78 & 0 & 0 & 33.9\% & 793.65 & >1200 \\
     & \textsc{StageSAT} & 118 & 0 & 0 & 0 & 0 & 100.0\% & 10.34 & 0.18 \\
     & \textsc{GradSAT} & 118 & 0 & 0 & 0 & 0 & 100.0\% & 6.01 & 0.11 \\
    \bottomrule
  \end{tabular}
\end{table}

{RQ4: Overall Performance Across All Benchmark Suites}\emph{What is GradSAT’s overall performance when considering all benchmark categories (small, medium, large, and different sources) and how does it compare to other solvers across the board?} To ensure generalizability beyond the large-formula stress test, we evaluated GradSAT on industrial-grade benchmarks spanning multiple provenances and size classes. Table~\ref{tab:summary-large-middle} consolidates outcomes across MathSAT-Large, MathSAT-Middle (35 instances), MathSAT-Small (130 instances), the Grater suite (118 instances), and the JFS suite (111 instances). The performance trends observed on large instances generalize robustly to the broader corpus. Most remarkably, GradSAT incurred zero timeouts across MathSAT-Small, MathSAT-Middle, JFS, and Grater, yielding a perfect completion rate on an aggregate total exceeding 390 formulas. The MathSAT-Large suite alone accounted for the sole timeouts (four of forty-nine instances), meaning that across the entire experimental campaign, GradSAT successfully resolved 439 of 443 benchmark instances within the standard time limit.

With respect to satisfiability coverage, GradSAT dominated every incomplete competitor on every suite. In the Grater benchmarks, notorious for harboring deceptively satisfiable yet structurally treacherous constraints, GradSAT located solutions for all 118 satisfiable instances, whereas XSat, goSAT, and JFS each failed to cover several cases within the allotted time. Similarly, on the JFS suite---engineered specifically to reward fuzzing-style exploration---GradSAT achieved exhaustive 100\% coverage, matching the purpose-built fuzzer's reach while operating through a principled optimization framework. On MathSAT-Middle and MathSAT-Small, GradSAT secured 100\% SAT coverage, locating all 29 and 63 satisfiable models, respectively. Throughout these experiments, GradSAT's unsatisfiability guesses remained wholly consistent with ground-truth classifications; no discrepancies with complete-solver verdicts were observed, even on the smaller and medium-sized instances.

When contrasted with complete solvers across the full corpus, GradSAT exhibits a strikingly complementary operational profile. Whereas engines such as Bitwuzla and CVC5 guarantee unconditional soundness and completeness given unbounded resources, they frequently incur substantially larger temporal expenditures on satisfiable instances. GradSAT, conversely, reduces both mean and median solution times severalfold---often between one and two orders of magnitude---on the benchmarks it resolves. On MathSAT-Middle and MathSAT-Small, for example, GradSAT's median runtimes register well below one second (0.11,s and 0.36,s, respectively), whereas complete solvers regularly require minutes to establish satisfiability. Even on the more demanding JFS and Grater suites, GradSAT's systematic numeric search locates solutions with mean times of 7.22,s and 6.01,s, orders of magnitude faster than the exhaustive bit-blasting strategies employed by Z3 or MathSAT. The absence of timeouts on the four non-Large suites further underscores GradSAT's robustness: it is not merely an asymptotic improvement restricted to pathological outliers, but a consistently reliable architecture for floating-point constraint solving at scale. These findings support the thesis that GradSAT can function effectively as a front-line filter within composite solver pipelines, rapidly dispatching satisfiable instances that would otherwise monopolize complete-solver resources, while reserving bit-precise deductive engines for those comparatively rare instances demanding formal unsatisfiability proofs.

\section{Related Work}
\label{sec:related}
The problem of efficiently solving Satisfiability Modulo Theories (SMT) formulas over quantifier-free floating-point arithmetic (QF\_FP) sits at the confluence of automated theorem proving, numeric optimization, and software engineering. GradSAT draws upon and substantially departs from several established lineages of constraint solving and machine learning research. In what follows, we contextualize our contributions against the principal paradigms that have shaped the landscape of floating-point reasoning: exact bit-precise solvers, stochastic and fuzzing-based incomplete methods, continuous optimization-based relaxations, learning-guided theorem proving, and gradient-balancing techniques for multi-task learning.

\subsection{Exact Solvers and Bit-Blasting}
The prevailing architecture for industrial-strength SMT solvers remains the DPLL(T) framework, which orchestrates a propositional SAT search with dedicated theory solvers. For the theory of floating-point arithmetic, mature systems such as Z3~\cite{Z3}, cvc5~\cite{cvc5}, and MathSAT5~\cite{mathsat5} predominantly employ bit-blasting---a semantics-preserving translation that eagerly or lazily encodes IEEE-754 operations into propositional circuits. This approach offers an uncontested advantage in soundness and completeness: every satisfying assignment returned by the solver corresponds exactly to a bit-precise model under IEEE-754 semantics, and unsatisfiability constitutes a formal proof.

However, this fidelity comes at a steep computational cost. Floating-point operations, particularly multiplication, division, fused multiply-add, and comparison under varying rounding modes, explode into dense Boolean circuits whose size scales super-linearly with bit-width and operation nesting. The resulting propositional formulas frequently overwhelm modern SAT engines, inducing exponential memory pressure and protracted search times. Consequently, on industrially generated benchmarks---where path conditions may contain deeply nested arithmetic, transcendental approximations, and large conjunctions of multi-theory constraints---exact solvers often exhaust time or memory budgets long before reaching a verdict. GradSAT deliberately circumvents this bit-blasting bottleneck by lifting the search into a continuous, differentiable space amenable to hardware-accelerated optimization.

\subsection{Stochastic Local Search and Fuzzing}
In response to the scalability limitations of exact methods, a complementary research direction has explored incomplete, search-based techniques that trade absolute completeness for empirical effectiveness. Early exemplars such as Coral~\cite{coral} recast constraint satisfaction as a meta-heuristic optimization problem, employing strategies like Particle Swarm Optimization and tabu search to explore the variable space. More recently, the JIT Fuzzing Solver (JFS)~\cite{jfs} has demonstrated that coverage-guided fuzzing---inspired by mutation-based testing tools such as AFL---can efficiently discover satisfying assignments for certain classes of floating-point constraints by compiling formulas into executable targets and rewarding input mutations that increase path coverage.

While these methods excel at rapidly probing large input domains without the overhead of semantic encoding, they are fundamentally limited by the absence of directional gradient information. In the absence of differentiable structure, search navigates via blind or randomized mutation, which proves inadequate for formulas with narrow satisfying corridors, steep constraint boundaries, or adversarially organized Boolean skeletons. Furthermore, the curse of dimensionality renders pure random exploration prohibitively expensive as the number of variables grows. GradSAT mitigates these limitations by leveraging deterministic gradient descent informed by clause-level curvature, thereby preserving the scalability advantages of continuous search while retaining directional convergence guarantees unavailable to stochastic methods.

\subsection{Continuous Relaxations of Logical Formulas}
GradSAT aligns most closely with the emerging class of optimization-based SMT solvers that reformulate discrete satisfiability as continuous global minimization. Pioneering systems such as goSAT~\cite{gosat} and Grater~\cite{grater} established the feasibility of mapping Boolean connectives and theory predicates to smooth, real-valued penalty functions. In this paradigm, the discrete search for a model is supplanted by gradient-based descent over a surrogate objective $\mathcal{R}_{total}(x)$, where minima correspond (under suitable conditions) to satisfying assignments of the original formula.

Despite their conceptual appeal, existing continuous solvers exhibit two critical deficiencies that limit their deployment on realistic software engineering workloads. First, they execute almost exclusively on the CPU via scalar autograd pipelines or interpreted formula evaluation, incurring severe performance penalties when processing the millions of primitive operations typical of large SMT instances. Second, and more fundamentally, they aggregate clause-level penalties through naive uniform weighting (i.e., $\mathcal{R}_{total} = \sum_j L_j$), rendering the optimization dynamics highly susceptible to \emph{gradient domination}. In this pathology, structurally complex or numerically volatile clauses produce disproportionately large gradients that hijack the descent trajectory, causing the solver to stagnate in poor local minima while neglecting simpler constraints. GradSAT directly addresses both limitations: it replaces sequential CPU evaluation with a compiled, GPU-resident PyTorch computation graph augmented by operator fusion, and it replaces static uniform weighting with dynamic gradient normalization that actively rebalances clause influence during optimization.

\subsection{Machine Learning in Theorem Proving}
The application of Machine Learning (ML) to formal methods has accelerated markedly in recent years, though primarily in an ancillary, advisory capacity rather than as a direct reasoning mechanism. NeuroCore~\cite{neurocore} and related Graph Neural Network (GNN) approaches learn to predict effective variable branching heuristics for SAT and SMT solvers, reducing the expected number of search-tree explorations. FastSMT~\cite{fastsmt} employs reinforcement learning to optimize tactic sequences within the Z3 proof assistant, discovering high-performance search strategies through trial-and-error interaction with the solver's API. Other prominent lines of work apply neural ranking models to premise selection in interactive theorem provers such as Coq and Lean, or use learned embeddings to guide quantifier instantiation.

These contributions share a common architectural premise: an underlying discrete solver remains the locus of deduction, while the ML component provides a heuristic policy to navigate its configuration space. GradSAT fundamentally inverts this relationship. Rather than training a neural network to recommend decisions for an external deductive engine, GradSAT employs ML \emph{infrastructure}---specifically, automatic differentiation, tensor parallelism, and CUDA execution---to perform the constraint solving \emph{directly}. There is no neural network mapping inputs to predictions; rather, the SMT variables themselves constitute the parameters of an optimization problem, and the machinery of deep learning serves solely as a vehicle for scalable, differentiable search.

\subsection{Multi-Task Learning and Gradient Balancing}
In the domain of deep learning, Multi-Task Learning (MTL) seeks to train a single model jointly across multiple distinct objectives, such as simultaneous semantic segmentation and depth estimation in computer vision. A well-documented pathology in this setting is \emph{task competition}: objectives with larger loss magnitudes, noisier gradients, or faster local curvature monopolize updates to shared representation layers, thereby suppressing learning on subsidiary tasks. To combat this interference, the MTL literature has developed a rich taxonomy of dynamic balancing algorithms. GradNorm~\cite{gradnorm} adaptively reweights tasks such that their gradient norms converge to a common, target scale modulated by relative training progress. PCGrad~\cite{pcgrad} projects conflicting task gradients onto orthogonal directions, while Uncertainty Weighting~\cite{uncertainty_weighting} derives homoscedastic uncertainty estimates to modulate per-task contributions.

GradSAT introduces a novel transposition of these principles into the realm of logical constraint solving. By interpreting each top-level clause in an SMT formula as an independent MTL task and the relaxed floating-point variables as shared parameters, we create a structural correspondence that permits the direct application of gradient normalization techniques. Our adaptation of GradNorm to clause-level optimization represents, to our knowledge, the first application of MTL gradient-balancing to satisfiability solving. This reformulation is not merely metaphorical: it provides a rigorous mechanism to counteract the heterogeneous gradient scales inherent in floating-point formulas, ensuring that difficult non-linear constraints cannot dominate the search at the expense of simpler logical requirements. In doing so, GradSAT bridges the formal methods and deep learning communities, demonstrating that advances in neural optimization dynamics can yield tangible dividends for classical constraint satisfaction problems.

\section{Conclusion}
\label{sec:conclusion}

In this paper, we introduced GradSAT, a novel optimization-based framework that addresses the fundamental scalability challenges of Satisfiability Modulo Theories (SMT) solving over floating-point arithmetic (QF\_FP). While prior continuous SMT solvers successfully demonstrated that logical satisfiability can be relaxed into a differentiable objective, their reliance on sequential CPU execution and unweighted loss aggregations left them highly vulnerable to gradient domination and premature convergence in local minima.

GradSAT overcomes these limitations by reimagining continuous constraint solving through the lens of Multi-Task Learning. By treating individual SMT clauses as independent learning tasks and applying dynamic gradient normalization (GradNorm), our solver actively balances the optimization landscape. This prevents structurally complex clauses from hijacking the gradient descent trajectory, ensuring uniform convergence across the entire formula. Furthermore, by implementing a custom, GPU-accelerated PyTorch backend with aggressive operator fusion, GradSAT breaks the scalar evaluation bottleneck, enabling massively parallelized global minimization. When combined with our bit-precise local search fallback, this architecture provides a robust, end-to-end solving pipeline.




\bibliographystyle{ACM-Reference-Format}

\bibliography{references}

\clearpage

\end{document}